\newtheorem{theorem}{Theorem}[section]
\theoremstyle{definition}
\newtheorem{definition}{Definition}[section]
\newcommand{\independent}{\perp\mkern-9.5mu\perp}
\newcommand{\notindependent}{\centernot{\independent}}
\newcommand{\RNum}[1]{\expandafter{\romannumeral #1\relax}}
\newtheorem{assumption}{A}
\newcommand{\matr}[1]{\mathbf{#1}}
\crefname{algocf}{alg.}{algs.}
\Crefname{algocf}{Algorithm}{Algorithms}
  \providecommand\BibTeX{{%
    \normalfont B\kern-0.5em{\scshape i\kern-0.25em b}\kern-0.8em\TeX}}}
\begin{document}

\title{On Causally Disentangled State Representation Learning for Reinforcement Learning based Recommender Systems}
\author{Siyu Wang}
\affiliation{%
  \institution{The University of New South Wales}
  \city{Sydney}
  \country{Australia}}
\email{siyu.wang5@unsw.edu.au}

\author{Xiaocong Chen}
\affiliation{%
  \institution{Data 61, CSIRO}
  \city{Eveleigh}
  \country{Australia}
}
\email{xiaocong.chen@data61.csiro.au}

\author{Lina Yao}
\affiliation{%
  \institution{Data 61, CSIRO}
  \city{Eveleigh}
  \country{Australia}}
\affiliation{
  \institution{The University of New South Wales}
  \city{Sydney}
  \country{Australia}}
\email{lina.yao@data61.csiro.au}

\renewcommand{\shortauthors}{Trovato and Tobin, et al.}

\begin{abstract}
In Reinforcement Learning-based Recommender Systems (RLRS), the complexity and dynamism of user interactions often result in high-dimensional and noisy state spaces, making it challenging to discern which aspects of the state are truly influential in driving the decision-making process. This issue is exacerbated by the evolving nature of user preferences and behaviors, requiring the recommender system to adaptively focus on the most relevant information for decision-making while preserving generaliability.
To tackle this problem, we introduce an innovative causal approach for decomposing the state and extracting \textbf{C}ausal-\textbf{I}n\textbf{D}ispensable \textbf{S}tate Representations (CIDS) in RLRS. Our method concentrates on identifying the \textbf{D}irectly \textbf{A}ction-\textbf{I}nfluenced \textbf{S}tate Variables (DAIS) and \textbf{A}ction-\textbf{I}nfluence \textbf{A}ncestors (AIA), which are essential for making effective recommendations. By leveraging conditional mutual information, we develop a framework that not only discerns the causal relationships within the generative process but also isolates critical state variables from the typically dense and high-dimensional state representations. We provide theoretical evidence for the identifiability of these variables. Then, by making use of the identified causal relationship, we construct causal-indispensable state representations, enabling the training of policies over a more advantageous subset of the agent's state space.
We demonstrate the efficacy of our approach through extensive experiments, showcasing our method outperforms state-of-the-art methods.
\end{abstract}


\keywords{Reinforcement Learning, Recommendation, Causal state representation}


\maketitle

\section{Introduction}

Recommender Systems (RS) are crucial in navigating the vast digital environment, tailoring suggestions to align with individual user preferences. The integration of Reinforcement Learning (RL) within RS, known as RLRS, has transformed the recommendation experience into a dynamic, sequential decision-making process. Unlike traditional systems that passively suggest content based on static data, RLRS actively engage with users, continuously adapting recommendations in response to real-time feedback. This approach seeks to maximize long-term user engagement by treating each interaction as an opportunity to learn and enhance the personalization of content, thereby maintaining alignment with the evolving preferences and behaviors of the users.

In RLRS, the efficiency hinges on three essential components: State Representation, Policy Optimization, and Reward Formulation~\cite{CHEN2023110335, afsar2022reinforcement}. While much of the current research in RLRS is centered on policy optimization~\cite{zhang2017dynamic, wang2020hybrid, chen2020knowledge,chen2024maximum} and reward formulation~\cite{chen2021generative, DBLP:conf/iclr/KostrikovADLT19, chen2018stabilizing}, the role of state representation should not be understated. The state in RLRS is a composite of varied attributes: user characteristics (like age, gender, and recent activities), item properties (including price, category, and popularity), and contextual elements (such as time and location). Effectively distilling this rich tapestry of information poses a significant challenge. Neglecting key features might result in suboptimal recommendations, while incorporating too much detail could clutter the system with irrelevant data, diminishing its predictive accuracy.
Recent advances in representation learning algorithms in RL aim to extract abstract features from high-dimensional data, which has proven beneficial in enhancing the efficiency of existing RL algorithms~\cite{lesort2018state, gelada2019deepmdp, huang2022action}. For instance,~\citet{zhang2021learning} developed a method to learn representations by ignoring task-irrelevant information through the bisimulation metric, while~\citet{pmlr-v162-wang22ae} proposed a state abstraction technique in model-based RL by learning a dynamic model that minimizes dependencies between state variables and actions. Despite these advancements in RL, the exploration of state representation in RLRS is still limited. RLRS often involves complex, high-dimensional data and intricate causal relationships within the recommender system. Rather than merely condensing the state aggregation, we aim to discern and highlight the specific state dimensions that are causally critical for decision-making in RLRS, providing a more targeted and effective approach to recommendation processes.

In RLRS, the agent's actions involve recommending items, while the rewards typically correspond to user feedback like clicks, purchases, or exits. However, rewards alone do not clearly indicate which aspects of the state influence user behavior, making it challenging to discern critical from irrelevant state dimensions for effective decision-making. To tackle this, we introduce Causal-Indispensable State Representations (CIDS) in RLRS. CIDS leverages causal relationships between actions and state variables, as well as among different state dimensions, to identify key elements crucial for policy learning.

CIDS focuses on two types of causal relationships. The first involves state dimensions directly influenced by actions, termed Directly Action-Influenced State Variables (DAIS). For instance, when a user's recent browsing history changes following specific item recommendations, it indicates a direct impact of the action on this state dimension. Such changes can extend to item features like popularity or category trends, offering valuable feedback on user preferences in response to recommendations.
The second type of causal relationship in CIDS pertains to state dimensions that affect DAIS. Since DAIS are pivotal for policy learning, other dimensions influencing DAIS are also deemed critical. This relationship encompasses interactions between various state dimensions, such as how a user's browsing history might correlate with their likelihood of engaging with similar items, or how static attributes like age and gender could inform preferences in certain categories.
This paper employs causal graphical models to uncover and identify these causal relationships within CIDS. We theoretically validate that CIDS can be determined using conditional dependence and independence relationships and can be learned from observations through conditional mutual information. Our main contributions are:
\begin{itemize}
    \item Reformulating the RLRS process using a causal graphical model to describe the interactions between actions and state dimensions.
    \item Theoretically characterizing CIDS by leveraging conditional dependence and independence relationships.
    \item Proposing a methodology to learn the underlying causal structure of CIDS from observations using conditional mutual information, thereby constructing CIDS for more efficient and effective recommendation policy learning.
    \item Demonstrating our method's efficacy on both online simulators and offline datasets, showing improvements in the efficiency and performance of recommendation policy learning through the application of CIDS.
\end{itemize}


\section{Preliminaries}
\label{sec:pre}

\subsection{Reinforcement Learning based Recommender Systems}
Reinforcement Learning (RL) in Recommender Systems (RS) is focused on optimizing decision-making through ongoing user interaction, framed within a Markov Decision Process (MDP) modelled by the tuple \( \langle \mathcal{S}, \mathcal{A}, \mathcal{R}, \mathcal{P}, \gamma \rangle \). Here, \( \mathcal{S} \) denotes the state space, containing user data, historical interactions, item characteristics, and contextual elements;
\( \mathcal{A} \) represents the action space, including all candidate items; \( \mathcal{R}: \mathcal{S} \times \mathcal{A} \to \mathbb{R} \) is the reward function, related to the user feedback; \( \mathcal{P} \) covers the transition probabilities of transitioning from one state to another; and \( \gamma \) is the discount factor, prioritizing immediate versus future rewards.

In the MDP framework, the agent (RS) interacts with its environment over discrete time steps denoted by \( t = 0, 1, 2, \ldots, n \). At each time step \( t \), the agent examines the current state \( s_t \), which encompasses user preferences, past interactions, and item information, all contained within the state space \( \mathcal{S} \). The agent then selects an action \( a_t \) from the action space \( \mathcal{A}(s_t) \), often comprising a set of item recommendations. This action prompts a transition to a subsequent state \( s_{t+1} \), and the agent receives a corresponding reward \( r_t \), reflecting the user’s response and the effectiveness of the recommended items. The RS strives to establish a policy \( \pi: \mathcal{S} \to \mathcal{A} \) that optimizes the cumulative discounted return, thus assessing the long-term effectiveness of its recommendations.

\subsection{Causal Graphical Model}
Let's formally define the causal graphical model followed by~\cite{10.5555/3202377}.
Consider a set of finitely many random variables denoted as $\mathbf{X} = (X^1, ..., X^d)$ with an index set $\mathbf{V}:=\{1, ..., d\}$. These random variables have a joint distribution $P_\mathbf{X}$ and a density function $p(\mathbf{x})$.

A causal graphical model is represented by a Directed Acyclic Graph
(DAG) $\mathcal{G} = (\mathbf{V}, \mathcal{E})$, where $\mathbf{V}$ represents the nodes or vertices of the graph, and $\mathcal{E}$ represents the edges between the nodes. The edges $\mathcal{E} \subseteq \mathbf{V}^2$ satisfy the property that for any node $v \in \mathbf{V}$, $(v, v) \notin \mathcal{E}$, meaning there are no self-loops in the graph.
In a causal graph, a random variable $X^i$ is considered a direct cause of $X^j$ if and only if $(i, j) \in \mathcal{E}$ and $(j, i) \notin \mathcal{E}$. 
Hence, it is assumed that the causal graph is acyclic, meaning that there are no directed cycles in the graph. This ensures that there are no causal loops where the causal influence could propagate indefinitely.

In a DAG, two disjoint subsets of vertices, denoted as $\mathbf{A}$ and $\mathbf{B}$, are considered d-separated (see~\Cref{def-d} in appendix) by a third disjoint subset $\mathbf{S}$ if all paths connecting nodes in $\mathbf{A}$ and $\mathbf{B}$ are blocked by $\mathbf{S}$. This relationship is denoted as $\mathbf{A} \independent_G \mathbf{B} | \mathbf{S}$.

\section{Methodology}
\subsection{Causal-indispensable State Representation}
\label{SEC-CISR}

\begin{figure}
    \centering
    \includegraphics[width=0.9\linewidth]{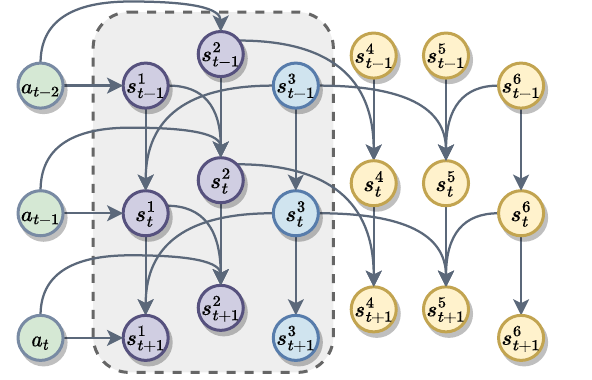}
    \caption{An illustrative causal graphical model for an MDP is depicted. The state $s_t$ is decomposed into six different dimensions, denoted as $s_t = (s^1_t, \ldots, s^6_t)$. The purple nodes signify DAIS, while the blue nodes symbolize AIA. The nodes enclosed within the gray area collectively represent CIDS, which contains both DAIS and AIA.}

    \label{fig:causal}
\end{figure}
To characterize a set of causal-indispensable state representations for RLRS, we consider decomposing $s_t$ into $d$ different dimensions denoted as $s_t = (s^1_t, ..., s^d_t)$.
Given the causal graphical model for one-step MDP $\mathcal{G} = (\mathbf{V}, \mathcal{E})$, where $\mathbf{V} = \{a_t, s^{1:d}_t, s^{1:d}_{t+1}\}$ represents the nodes of the graph, and $\mathcal{E}$ represents the edges describing the
causal relationships between actions and different dimensions of states.
It is commonplace that the action variable may not influence every dimension of the state variable, and there are structural relationships among different dimensions of $s_t$. 
For example, consider the causal graphical model shown in~\Cref{fig:causal}, $a_{t-1}$ is a direct cause of $s^1_t$ and $s^2_t$, and $s^2_t$ has an impact on $s^4_{t+1}$, while there is no edge from $a_{t-1}$ to $s^5_{t}$.
To reflect the different structural relationships between actions and different dimensions of states, and among different dimensions of states, we consider the decomposition of the state $s_t$ as follows:

\begin{definition}[Causal Decomposition of State]
\label{CIDS}
    Given the causal graphical model, such as the model in~\Cref{fig:causal}, that describes the causal relationship among the actions and dimensions of states within the MDP environment, the state can be decomposed into two subsets: \textbf{C}ausal-\textbf{I}n\textbf{D}ispensable \textbf{S}tate Representations (CIDS) and causal-dispensable state representations, where the CIDS is defined as including the subsets of state dimensions satisfying one of the following causal relationships:
    \begin{enumerate}[(i)]
    \item \textbf{D}irectly \textbf{A}ction-\textbf{I}nfluenced \textbf{S}tate Variables (DAIS): A state variable \( s^i_t \in \mathcal{S} \) belongs to $\text{DAIS}_t$ if there exists a direct edge from \( a_{t-1} \in \mathcal{A} \) to \(  s^i_t \) in the causal graph. 
    \item \textbf{A}ction-\textbf{I}nfluence \textbf{A}ncestors (AIA): A state variable \( s^j_t \in \mathcal{S} \) belongs to $\text{AIA}_t$ if it is an ancestor of any state variable in DAIS in the causal graph. That is, there exists a directed path in the causal graph from \( s^j_t \) to some \( s^i_{t+1} \in \text{DAIS}_{t+1} \), and there is no direct edge from any action variable \( a_{t-1} \in \mathcal{A} \) to \( s^j_t \).
    \end{enumerate}
    Correspondingly, a state variable \( s^m_t \in \mathcal{S} \) belongs to causal-dispensable state representations (CDS) if it is neither a part of DAIS nor AIA. That is, \( s^m_t \) has no direct causal relationship with action variable \( a_t \in \mathcal{A} \) or any state in DAIS.
\end{definition}

In RLRS, the state often includes a variety of user and item features. RL agents learn to choose appropriate actions according to the current state vector $s_t$ to improve user satisfaction and engagement, in which some dimensions may be redundant for policy learning. In an RLRS, the policy dictates how recommendations are made based on the current state. Utilizing CIDS would mean that the policy learning is based on the most causally indispensable and sufficient state representations when the structural relationship is given. \Cref{fig:causal} shows different types of state dimensions in the example causal structural relationship. The state variables in purple nodes belong to DAIS, as these are state variables that have a direct causal relationship with action variables (e.g. the historical interaction or the item popularity will be impacted after the action).

In other words, they are the descendants of action variables in the causal graph. These state dimensions change directly in response to the actions taken by the agent. The state variables in blue nodes belong to AIA, as these are state variables that are ancestors of the DAIS in the causal graph. They do not directly interact with action variables but have an impact on the DAIS (e.g. user age or gender can influence preferences and, consequently, historical interaction). They form a preceding layer in the causal structure, influencing the states that are directly affected by the agent's actions. 

Then we demonstrate how the proposed CIDS can be determined by using the conditional dependence and independence relationship among the variables under the following assumptions~\cite{mastakouri2021necessary}:

\begin{assumption}
    Markov condition\footnote{Here we use the global version of Markov condition in \Cref{Markov}(i)} and faithfulness
     for the underlying DAG.
\end{assumption}

\begin{assumption}
    There is an arrow from $s^i_{t-1}$ to $s^i_{t}$.
\end{assumption}

\begin{assumption}
    There are no long-range arrows, i.e. arrows from $s^i_{t-m}$ to $s^i_{t}$ for any $m>1$, and no backward arrows in time.
\end{assumption}

\begin{assumption}
    There are no arrows between state variables at the same timestep, i.e., no arrow from \( s^i_{t} \) to \( s^j_{t} \) for all \( i \) and \( j \).
\end{assumption}

The assumption A1 allows us to have a one-to-one correspondence between d-separation statements in the graph $\mathcal{G}$ and the corresponding conditional independence statements in the distribution. The assumptions A2-A4 impose some restrictions on the connectivity of the graph with respect to the properties of MDP.

\begin{theorem}
    Under the assumptions A1-A4, \( s^i_{t+1} \in \text{DAIS}_{t+1} \) if and only if \( a_{t} \notindependent s^i_{t+1} | \text{DAIS}_{t} \).
    \label{th-DAIS}
\end{theorem}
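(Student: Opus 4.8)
The plan is to trade the probabilistic statement for a purely graphical one via Assumption~A1. By the global Markov property, $a_t \independent_{\mathcal{G}} s^i_{t+1}\mid\text{DAIS}_t$ implies $a_t \independent s^i_{t+1}\mid\text{DAIS}_t$, and faithfulness supplies the converse, so $a_t \notindependent s^i_{t+1}\mid\text{DAIS}_t$ is equivalent to ``$a_t$ and $s^i_{t+1}$ are d-connected given $\text{DAIS}_t$''. It therefore suffices to show that this d-connection holds exactly when the arrow $a_t\to s^i_{t+1}$ is present in $\mathcal{G}$, i.e.\ exactly when $s^i_{t+1}\in\text{DAIS}_{t+1}$. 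I would repeatedly use the structural consequences of A2--A4: edges of $\mathcal{G}$ never point backward in time, no edge joins two state variables at the same timestep, the only state-children of the (source) action vertex $a_t$ are the members of $\text{DAIS}_{t+1}$, and $\text{DAIS}_t$ is the set of state-children of $a_{t-1}$, a collection of components of $s_t$ lying entirely at time $t$; in particular $a_t,s^i_{t+1}\notin\text{DAIS}_t$.

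For the ``only if'' direction, if $s^i_{t+1}\in\text{DAIS}_{t+1}$ then the arrow $a_t\to s^i_{t+1}$ exists, and this single-edge path has no intermediate vertex (hence no inactive collider) and no endpoint in $\text{DAIS}_t$, so it is open and $a_t,s^i_{t+1}$ are d-connected given $\text{DAIS}_t$. The real work is the ``if'' direction, which I would prove by contraposition: assuming the arrow $a_t\to s^i_{t+1}$ is absent, show that every path $p$ between $a_t$ and $s^i_{t+1}$ is blocked by $\text{DAIS}_t$. Trace $p$ as a vertex sequence $a_t=u_0,u_1,\dots,u_m=s^i_{t+1}$ and track the time indices along it: they start at $t$, end at $t+1$, and never decrease across an edge; since $a_t$ is a source the sequence first moves up to $t+1$, and since the direct arrow is absent, $m\ge 2$, so the sequence cannot stay monotone and must first rise and then fall, attaining a peak at some interior vertex $v$ with time index $\ge t+1$ (A4 forbids $p$ from lingering inside a single time slice, so the peak is genuinely at an interior vertex). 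At such a $v$ both edges of $p$ incident to $v$ point into $v$, so $v$ is a collider on $p$; and because no edge points backward in time, $v$ and all of its descendants lie at time $\ge t+1$, hence $\{v\}\cup\operatorname{de}(v)$ is disjoint from $\text{DAIS}_t$. Thus $v$ is an unactivated collider and $p$ is blocked, yielding $a_t\independent_{\mathcal{G}} s^i_{t+1}\mid\text{DAIS}_t$ and hence $a_t\independent s^i_{t+1}\mid\text{DAIS}_t$.

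I expect the main obstacle to be making the case analysis in the ``if'' direction airtight --- being sure the time-index argument genuinely captures \emph{every} path shape, including back-door paths that first leave $a_t$ ``into the past'' should the data-generating policy induce state$\to$action edges, in which case one additionally argues that $p$ must cross a child of $a_{t-1}$, that is, a member of $\text{DAIS}_t$, as a chain or fork vertex and is blocked there. This is exactly where A2--A4 earn their keep: ``no instantaneous effects'', ``no long-range or backward arrows'', and ``an arrow $s^i_{t-1}\to s^i_t$'' together force any spurious path either to fold back through a collider living at time $\ge t+1$ --- harmless, since $\text{DAIS}_t$ sits at time $t$ --- or to route through $\text{DAIS}_t$ as a non-collider. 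Turning this dichotomy into a clean argument rather than a sprawling enumeration of path types is the part that needs care; the remainder is routine d-separation bookkeeping.
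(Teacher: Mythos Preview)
Your proposal is correct and follows essentially the same route as the paper: use A1 to exchange conditional (in)dependence for d-(separation/connection), then argue the forward direction from the direct edge and the backward direction from the time-ordering constraints A2--A4. Your backward direction is in fact more careful than the paper's, which simply asserts that ``the only permissible connection from $a_t$ to $s^i_{t+1}$ under these constraints is a direct edge'' without spelling out the collider analysis; your peak-vertex argument (the maximum-time interior vertex is a collider whose descendants all lie at time $\ge t+1$ and hence outside $\text{DAIS}_t$) is exactly the content needed to make that assertion rigorous, and your caution about possible back-door paths through policy-induced $s\to a$ edges is unnecessary in the paper's model (actions are treated as exogenous sources) but does no harm.
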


\begin{proof}(Proof by Contradiction)

Forward Direction (\( \Rightarrow \)):

Suppose \( s^i_{t+1} \in \text{DAIS}_{t+1} \). By definition, there exists a direct edge from \( a_{t} \) to \( s^i_{t+1} \) in the causal graph. To prove by contradiction, assume that \( a_{t} \independent s^i_{t+1} | \text{DAIS}_{t} \).

However, \( a_{t} \independent s^i_{t+1} | \text{DAIS}_{t} \) would violate the assumption A1, under which the direct edge from \( a_{t} \) to \( s^i_{t+1} \) implies that \( a_{t} \) and \( s^i_{t+1} \) are not conditionally independent given any set that doesn't include one of them.

Therefore, we show that if \( s^i_{t+1} \in \text{DAIS}_{t+1} \), then \( a_{t} \notindependent s^i_{t+1} | \text{DAIS}_{t} \).

Backward Direction (\( \Leftarrow \)):

Suppose \( a_{t} \notindependent s^i_{t+1} | \text{DAIS}_{t} \), implying a direct or indirect influence from \( a_{t} \) to \( s^i_{t+1} \).
For contradiction, assume \( s^i_{t+1} \notin \text{DAIS}_{t+1} \), meaning there is no direct edge from \( a_{t} \) to \( s^i_{t+1} \).

Considering assumptions A2-A4, the only permissible connection from \( a_{t} \) to \( s^i_{t+1} \) under these constraints is a direct edge, since no long-range arrows, backward arrows, or arrows between state variables at the same timestep are permitted.
Consequently, if \( a_{t} \notindependent s^i_{t+1} | \text{DAIS}_{t} \), it implies the existence of a direct edge from \( a_{t} \) to \( s^i_{t+1} \), necessitating that \( s^i_{t+1} \) is indeed a part of \( \text{DAIS}_{t+1} \). This stands in contradiction to our initial assumption that \( s^i_{t+1} \notin \text{DAIS}_{t+1} \).

Therefore, it is established that if \( a_{t} \notindependent s^i_{t+1} | \text{DAIS}_{t} \), then it must follow that \( s^i_{t+1} \in \text{DAIS}_{t+1} \).

By contradiction in both directions, the theorem is proven. Under assumptions A1-A4, \( s^i_{t+1} \in \text{DAIS}_{t+1} \) if and only if \( a_{t} \notindependent s^i_{t+1} | \text{DAIS}_{t} \).
\end{proof}
\begin{theorem}
        Under the assumptions A1-A4, \( s^i_{t-1} \in \text{AIA}_{t-1} \) if and only if $a_{t-1} \notindependent s^i_{t-1} | \text{DAIS}_t$
        \label{th-AIA}
\end{theorem}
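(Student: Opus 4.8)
The plan is to follow the same proof-by-contradiction template as \Cref{th-DAIS}, but the mechanism here is a \emph{collider} rather than a direct edge: conditioning on $\text{DAIS}_t$ \emph{opens} colliders located at its elements, and the AIA variables turn out to be exactly the time-$(t{-}1)$ state variables sitting immediately ``above'' such a collider. Throughout I treat $a_{t-1}$ as a source node of the one-step MDP graph (it has no state parents), so every path leaving $a_{t-1}$ begins with an out-edge, and I read A2--A4 as forcing every non-action edge to advance time by exactly one step.

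\textbf{Forward direction.} Assume $s^i_{t-1}\in\text{AIA}_{t-1}$. By the definition of AIA there is a directed path from $s^i_{t-1}$ to some $s^j_t\in\text{DAIS}_t$; by A3 (no backward or long-range arrows) and A4 (no within-slice arrows) this path cannot escape the slice between times $t{-}1$ and $t$, so it must be the single edge $s^i_{t-1}\to s^j_t$. Since $s^j_t\in\text{DAIS}_t$, there is also an edge $a_{t-1}\to s^j_t$, so $s^j_t$ is a collider on the path $a_{t-1}\to s^j_t\leftarrow s^i_{t-1}$. Suppose for contradiction $a_{t-1}\independent s^i_{t-1}\mid\text{DAIS}_t$. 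As $s^j_t\in\text{DAIS}_t$ lies in the conditioning set, this collider is activated, so $a_{t-1}$ and $s^i_{t-1}$ are d-connected given $\text{DAIS}_t$; faithfulness (A1) then contradicts the assumed independence. Hence $a_{t-1}\notindependent s^i_{t-1}\mid\text{DAIS}_t$.

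\textbf{Backward direction.} Assume $a_{t-1}\notindependent s^i_{t-1}\mid\text{DAIS}_t$ and, for contradiction, $s^i_{t-1}\notin\text{AIA}_{t-1}$. By the global Markov property (A1) there is a path from $a_{t-1}$ to $s^i_{t-1}$ that is active given $\text{DAIS}_t$. Since $a_{t-1}$ is a source, the path begins $a_{t-1}\to s^j_t$ with $s^j_t\in\text{DAIS}_t$, hence in the conditioning set; a conditioned node keeps a path active only when it is a collider, so the path must continue $a_{t-1}\to s^j_t\leftarrow w$, and by A2--A4 the parent $w$ is a state variable at time $t{-}1$. One then argues --- inductively along the path, classifying each vertex as collider or non-collider and using A3--A4 --- that every time-$(t{-}1)$ vertex an active path can reach must itself lie on a directed route into $\text{DAIS}_t$, so in particular $s^i_{t-1}$ is an ancestor of $\text{DAIS}_t$; combined with $s^i_{t-1}\notin\text{DAIS}_{t-1}$ (no incoming action edge, so it was not already classified as DAIS), this forces $s^i_{t-1}\in\text{AIA}_{t-1}$, the desired contradiction.

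\textbf{Main obstacle.} The hard part is that last step of the backward direction: ruling out \emph{longer} active paths from $a_{t-1}$ to a non-AIA variable $s^i_{t-1}$ --- in particular a path that exits $a_{t-1}$ through a $\text{DAIS}_t$ collider, dips to earlier timesteps via forks/chains, and re-enters time $t{-}1$ at $s^i_{t-1}$. Closing this cleanly needs the full strength of A2--A4 together with $a_{t-1}$ being a source, plus an invariant showing that activeness forces each visited time-$(t{-}1)$ vertex to be an ancestor of $\text{DAIS}_t$; it also implicitly relies on the characterization being applied only to variables not already extracted as $\text{DAIS}_{t-1}$, which is how it is used in the pipeline following \Cref{th-DAIS}.
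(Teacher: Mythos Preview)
Your proposal is correct and follows essentially the same route as the paper: the forward direction via the collider path $a_{t-1}\to s^k_t\leftarrow s^i_{t-1}$ opened by conditioning on $\text{DAIS}_t$, and the backward direction by arguing that an active path under A2--A4 forces a directed edge from $s^i_{t-1}$ into $\text{DAIS}_t$. If anything, you are more explicit than the paper---it dispatches the backward direction with a single sentence (``Under assumptions A2--A4, we can see that the path \ldots\ suggests existing a directed path from $s^i_{t-1}$ to some state in $\text{DAIS}_t$''), whereas you spell out the source-node constraint on $a_{t-1}$, the collider classification of the first intermediate vertex, and honestly flag the inductive step over longer paths as the place where care is needed.
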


\begin{proof}(Prove by Contradiction)

Forward Direction (\( \Rightarrow \)):

    Suppose \( s^i_{t-1} \in \text{AIA}_{t-1} \), meaning there's a directed edge from \( s^i_{t-1} \) to some \( s^k_{t} \in \text{DAIS}_t \). Given \( s^k_{t} \in \text{DAIS}_t \), there is a directed edge from \( a_{t-1} \) to \( s^k_{t} \). Hence, there is a path between \( a_{t-1} \) and \( s^i_{t-1} \in \text{AIA}_{t-1} \): \( a_{t-1} \rightarrow s^k_{t} \leftarrow s^i_{t-1} \).
    
    Assume for contradiction that \( a_{t-1} \independent s^i_{t-1} | \text{DAIS}_t \), which suggests that \( \text{DAIS}_t \) blocks every path between \( a_{t-1} \) and \( s^i_{t-1} \). However, this contradicts the path \( a_{t-1} \rightarrow s^k_{t} \leftarrow s^i_{t-1} \), which is not be blocked by \( \text{DAIS}_t \) (see~\Cref{def-d}).
    
    Therefore, we conclude that if  \( s^i_{t-1} \in \text{AIA}_{t-1} \), then \( a_{t-1} \notindependent s^i_{t-1} | \text{DAIS}_t \).

Backward Direction (\( \Leftarrow \)):

    Suppose \( a_{t-1} \notindependent s^i_{t-1} | \text{DAIS}_t \) and assume for contradiction that \( s^i_{t-1} \notin \text{AIA}_{t-1} \).

    If \( s^i_{t-1} \notin \text{AIA}_{t-1} \), it implies there is no directed path from \( s^i_{t-1} \) to any state in \( \text{DAIS}_t \).
    However, \( a_{t-1} \notindependent s^i_{t-1} | \text{DAIS}_t \) indicates at least one path between \( a_{t-1} \) and \( s^i_{t-1} \) that is not blocked by \( \text{DAIS}_t \). Under assumptions A2-A4, we can see that the path between \( a_{t-1} \) and \( s^i_{t-1} \)  suggests existing a directed path from \( s^i_{t-1} \) to some state in \( \text{DAIS}_t \).
    This directed path contradicts the assumption that \( s^i_{t-1} \notin \text{AIA}_{t-1} \).
   Therefore, if \( a_{t-1} \notindependent s^i_{t-1} | \text{DAIS}_t \), then \( s^i_{t-1} \) must be in \( \text{AIA}_{t-1} \).

By proving by contradiction in each direction, we establish the theorem. Under the assumptions A1-A4, \( s^i_{t-1} \in \text{AIA}_{t-1} \) if and only if $a_{t-1} \notindependent s^i_{t-1} | \text{DAIS}_t$
\end{proof}

\subsection{Causal-indispensable State Representation Learning}
\label{CIDS_Learning}

When learning the CIDS in practice, a significant challenge arises from the unknown nature of the causal graphical model. Therefore, instead of learning CIDS directly from the causal graphical model, we rely on the collected transition data \(\mathcal{D} = \{(s^l_t, a^l_t, s^l_{t+1}, r^l_t)\}\). 
To tackle this, we initially define the causal structure of a DAG $\mathcal{G}$, which describes the causal relationships within the MDP environment. This structure explicitly encodes the causal relationships between actions and various state dimensions, alongside the relationships among state dimensions themselves:

\begin{equation}
    s^j_{t+1} = f(\matr{M}^{(\cdot, j)}_{s \to s} \odot s_t, \matr{M}^{(,j)}_{a \to s} \odot a_t, u^{s,j}_t), \quad \text{for } j = 1, \ldots, d.
    \label{transition}
\end{equation}

Here, \(s^j_{t+1}\) is a dimension of the state at time \(t+1\), and \( s_{t+1} = (s^1_{t+1}, \ldots, s^d_{t+1})\). The binary mask \(\matr{M}\) captures the causal structure, and $\odot$ denotes the element-wise multiplication. The matrix \(\matr{M}_{s \to s} \in \{0, 1\}^{d \times d}\) indicates causal relationships between dimensions of the state \(s_t\) and the next state \(s_{t+1}\). Specifically, \(\matr{M}^{(i, j)}_{s \to s} = 1\) implies an edge from \(s^i_t\) to \(s^j_{t+1}\). Similarly, the matrix \(\matr{M}_{a \to s} \in \{0, 1\}^{1 \times d}\) represents causal relationships between the action \(a_t\) and the dimensions of the next state \(s_{t+1}\). The notation \(\matr{M}^{(,j)}_{a \to s}\) denotes the \(j\)-th column of this matrix. We operate under the assumption that the causal structure linking \(s_t\), \(s_{t+1}\), and \(a_t\) is time-invariant without any unobserved confounders.

Based on \Cref{CIDS}, DAIS comprises a set of state variables that have direct edges from the action \( a_{t-1} \). We can represent DAIS using the causal structure mask \(\matr{M}\) as:
$\text{DAIS}_t = \matr{M}_{a \to s} \odot s_t$.
Similarly, AIA consists of state variables at time \( t \) that have a causal influence on any state variable in \( \text{DAIS}_{t+1} \) at the next time step. Accordingly, AIA can be formulated as $\text{AIA}_t = \matr{M}_{s \to s} \odot s_t$.

As discussed in \Cref{SEC-CISR}, \(\text{DAIS}_{t}\) and \(a_{t-1}\) are not conditionally independent given \(\text{DAIS}_{t-1}\). In contrast, other state dimensions, except for \(\text{DAIS}_{t}\), are conditionally independent given \(\text{DAIS}_{t-1}\). Similarly, for the Action-Influence Ancestors (AIA), \(\text{AIA}_{t-1}\) and \(a_{t-1}\) are not conditionally independent given \(\text{DAIS}_{t}\). However, other state dimensions, excluding \(\text{AIA}_{t}\), exhibit conditional independence given \(\text{DAIS}_{t}\). 
We proceed to formalize the learning process CIDS using the principles of Conditional Mutual Information (CMI). We aim to maximize the CMI to learn the DAIS and AIA in each case.

The learning process for DAIS can be formulated by maximizing the following objective:
\begin{equation}
    \mathcal{L}_{DAIS} = I(\text{DAIS}_{t+1} ; a_{t} | \text{DAIS}_{t}),
\end{equation}
where \(I(\cdot)\) denotes the Mutual Information. This equation captures the mutual information between \(\text{DAIS}_{t+1}\) and \(a_{t}\) conditioned on \(\text{DAIS}_{t}\).

Subsequently, AIA can be learned by maximizing the following:
\begin{equation}
    \mathcal{L}_{AIA} = I(\text{AIA}_{t-1} ; a_{t-1} | \text{DAIS}_{t}),
\end{equation}
which similarly utilizes mutual information measures to identify the most relevant state dimensions constituting \(\text{AIA}_{t-1}\).

The mutual information \( I(\text{DAIS}_{t+1}; a_{t} | \text{DAIS}_{t}) \) can be expressed in terms of conditional entropy, which is defined as the difference between the conditional entropy of one variable given the conditioning variable and the conditional entropy of the same variable given both the conditioning variable and the second variable. 
Thus, it can be defined using conditional entropy as follows: 

\begin{equation}
\begin{aligned}
    I(\text{DAIS}_{t+1} ; a_{t} | \text{DAIS}_{t}) &= H(\text{DAIS}_{t+1} | \text{DAIS}_{t}) \\
    &- H(\text{DAIS}_{t+1} | a_{t}, \text{DAIS}_{t}).
\end{aligned}
\end{equation}

By leveraging the probabilistic predictive model of DAIS parameterized by \(\theta_{D_i}\).
Then the conditional entropy of \(\text{DAIS}_{t+1}\) given \(\text{DAIS}_{t}\) is calculated as:

\begin{equation}
\begin{aligned}
& H(\text{DAIS}_{t+1} | \text{DAIS}{t})\\
& = -\mathbb{E}_{s_t, a_t, s_{t+1}\sim
    \mathcal{D}} \left[ \log P(\text{DAIS}_{t+1} | \text{DAIS}_{t}; \theta_{D_1}) \right]\\
&= -\mathbb{E}_{s_t, a_t, s_{t+1}\sim
    \mathcal{D}} \left[ \sum_{j=1}^d \log P(s^j_{t+1} \in \text{DAIS}_{t+1} | s_t, \matr{M}^{(,j)}_{a \to s} ; \theta_{D_1}) \right]
\end{aligned}
\end{equation}

The conditional entropy of \(\text{DAIS}_{t+1}\) given both \(a_{t}\) and \(\text{DAIS}_{t}\) is calculated similarly, but conditioning on both the previous action and \(\text{DAIS}_{t}\):

\begin{equation}
\begin{aligned}
& H(\text{DAIS}_{t+1} | a_{t}, \text{DAIS}{t})\\
& = -\mathbb{E}_{s_t, a_t, s_{t+1}\sim
    \mathcal{D}} \left[ \log P(\text{DAIS}_{t+1} | a_{t}, \text{DAIS}_{t} ; \theta_{D_2}) \right]\\
&= -\mathbb{E}_{s_t, a_t, s_{t+1}\sim
    \mathcal{D}} \left[ \sum_{j=1}^d \log P(s^j_{t+1} \in \text{DAIS}_{t+1} | a_{t}, s_t, \matr{M}^{(,j)}_{a \to s} ; \theta_{D_2}) \right]
\end{aligned}
\end{equation}

 For the AIA and the corresponding mutual information term, the expression can be reformulated in a similar way:

\begin{equation}
\begin{aligned}
&I(\text{AIA}_{t-1} ; a_{t-1} | \text{DAIS}_{t}) \\
= & H(\text{AIA}_{t-1} | \text{DAIS}_{t}) - H(\text{AIA}_{t-1} | a_{t-1}, \text{DAIS}_{t})\\
= & -\mathbb{E}_{s_{t-1}, a_{t-1}, s_{t} \sim \mathcal{D}} \left[ \sum_{j=1}^d \log P(s^j_{t-1} \in \text{AIA}_{t-1} | s_t, \matr{M}^{(,j)}_{a \to s}; \theta_{A_1}) \right] \\
& + \mathbb{E}_{s_{t-1}, a_{t-1}, s_{t} \sim \mathcal{D}} \left[ \sum_{j=1}^d \log P(s^j_{t-1} \in \text{AIA}_{t-1}  | a_{t-1}, s_t, \matr{M}^{(,j)}_{a \to s}; \theta_{A_2}) \right],
\end{aligned}
\end{equation}

where \(\theta_{A_i}\) are the parameters of the probabilistic predictive model of AIA. 

\begin{theorem}(Identifiability of Causal Structure)
Given the observable state \( s_t \) and action \( a_t \), which form the equation in~\Cref{transition}, the causal structure masks \( \matr{M}_{s \to s} \) and \( \matr{M}_{a \to s} \)
are identifiable under the global Markov condition and faithfulness assumption.
~\label{identifiability}
\end{theorem}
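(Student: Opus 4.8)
The plan is to reduce the identifiability of each entry of $\matr{M}_{s\to s}$ and $\matr{M}_{a\to s}$ to a conditional (in)dependence statement in the observational distribution of $(s_t,a_t,s_{t+1})$, and then invoke the global Markov condition together with faithfulness (assumption~A1) to turn that statement into the presence or absence of the corresponding edge in $\mathcal{G}$. By~\Cref{transition} and assumptions~A2--A4 (with the stated no-unobserved-confounder and time-invariance assumptions), the parents of any next-state coordinate $s^j_{t+1}$ are contained in $\{s^1_t,\dots,s^d_t,a_t\}$, so $\matr{M}^{(i,j)}_{s\to s}=1$ is exactly the edge $s^i_t\to s^j_{t+1}$ and $\matr{M}^{(,j)}_{a\to s}=1$ is exactly the edge $a_t\to s^j_{t+1}$; recovering the masks is therefore the same as recovering the one-step transition DAG, which is the object I will identify.

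First I would handle the state-to-state mask via the claim
\[
\matr{M}^{(i,j)}_{s\to s}=1 \quad\Longleftrightarrow\quad s^i_t \notindependent s^j_{t+1} \mid \bigl(s_t\setminus\{s^i_t\}\bigr)\cup\{a_t\}.
\]
For the contrapositive of ``$\Leftarrow$'', suppose there is no direct edge $s^i_t\to s^j_{t+1}$; using~A2--A4 I would argue that every path between $s^i_t$ and $s^j_{t+1}$ is blocked by $\bigl(s_t\setminus\{s^i_t\}\bigr)\cup\{a_t\}$: a path that first moves forward into slice $t+1$ must re-enter through a collider in slice $t+1$, which is uninstantiated and has no instantiated descendant (time runs forward), hence blocks; a path that instead routes through another coordinate of $s_t$, through $a_t$, or through a common cause in slice $t-1$ followed by a coordinate of $s_t$, meets a non-collider lying in the conditioning set, hence blocks. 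Thus $s^i_t$ and $s^j_{t+1}$ are d-separated by that set, and the Markov condition yields the conditional independence in the distribution. For ``$\Rightarrow$'', a direct edge is a path no conditioning set can block, so faithfulness forces the conditional dependence. Running the analogous argument with conditioning set $s_t$ gives $\matr{M}^{(,j)}_{a\to s}=1 \Leftrightarrow a_t \notindependent s^j_{t+1}\mid s_t$, since conditioning on the whole slice $s_t$ closes every back-door path out of $a_t$ (its parents sit in $s_t$) and every forward path (collider in slice $t+1$), leaving only a possible direct edge unblocked; this is consistent with~\Cref{th-DAIS}, which uses the smaller set $\text{DAIS}_t\subseteq s_t$. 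Because both right-hand sides are functionals of the observational distribution alone, every entry of both masks is identified.

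It then follows that the causal objects of~\Cref{CIDS} are identified as well: $\text{DAIS}_{t+1}=\{s^j_{t+1}:\matr{M}^{(,j)}_{a\to s}=1\}$ is read off directly, and, since the structure is time-invariant, $\text{AIA}_t$ is the ancestral closure of $\text{DAIS}$ along the $s\to s$ edges encoded by $\matr{M}_{s\to s}$. This also explains why the CMI objectives in~\Cref{CIDS_Learning} are the right learning targets: $I(X;Y\mid Z)>0$ iff $X\notindependent Y\mid Z$, so $\mathcal{L}_{DAIS}$ and $\mathcal{L}_{AIA}$ are maximized exactly by masks selecting the true DAIS and AIA, by~\Cref{th-DAIS,th-AIA}.

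I expect the main obstacle to be the path-blocking step. Since the true parent sets are not known in advance, one is forced to condition on entire time slices, and one must check --- using precisely A2 (the self-edge $s^j_t\to s^j_{t+1}$ guarantees $s^j_t$ is always in the conditioning set), A3 (no long-range or backward-in-time edges, so no ``shortcut'' ancestors of $s^j_{t+1}$ outside slice $t$ and $a_t$), and A4 (no intra-slice edges, so no path can linger inside $s_t$) --- that this ``condition on everything else'' strategy never accidentally opens a collider path. Exact-cancellation (unfaithful) distributions are ruled out by A1, so that case needs no separate treatment.
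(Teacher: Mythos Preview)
Your proposal is correct and follows the same approach as the paper: both argue that the global Markov condition together with faithfulness gives a one-to-one correspondence between d-separation in $\mathcal{G}$ and conditional independence in the observed distribution, so each mask entry is identified by a conditional (in)dependence test. Your version is considerably more explicit than the paper's own proof---you specify concrete conditioning sets $(s_t\setminus\{s^i_t\})\cup\{a_t\}$ and $s_t$ and carry out the path-blocking case analysis using A2--A4, whereas the paper's argument stays at the abstract level of ``Markov plus faithfulness implies the masks are determined by conditional independence relations''---but the underlying idea is the same.
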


The proof of~\Cref{identifiability} is detailed in~\Cref{app:th3.3}. This theorem lays the theoretical groundwork necessary for identifying the causal structure masks from observed data.

\subsection{Objective Function}
\label{sec:obj}
The predictive models are designed with distinct objectives for learning the DAIS and AIA components, parameterized by $\theta_{D}$ and $\theta_{A}$ respectively. The objective function for each component is tailored to minimize loss while incorporating a regularization term to promote sparsity in the learned structural matrices.

The objective function for the DAIS predictive model is given by:
\begin{equation}
    \mathcal{L}_{\text{DAIS-model}} = -\mathcal{L}_{\text{DAIS}} + \lambda_1 \| \matr{M}_{a \to s} \|_1,
    \label{eq:l_dais}
\end{equation}
where $\mathcal{L}_{\text{DAIS}}$ is 
related to the learning of DAIS and $\lambda_1$ is a hyperparameter that controls the sparsity of the action-to-state causal structure matrix.

For the AIA predictive model, the objective function is:
\begin{equation}
    \mathcal{L}_{\text{AIA-model}} = -\mathcal{L}_{\text{AIA}} + \lambda_2 \| \matr{M}_{s \to s} \|_1,
    \label{eq:l_aia}
\end{equation}
where $\mathcal{L}_{\text{AIA}}$ is 
associated with the learning of AIA and $\lambda_2$ is a hyperparameter for the sparsity of the state-to-state causal structure matrix.

The detailed formulations for $\mathcal{L}_{\text{DAIS}}$ and $\mathcal{L}_{\text{AIA}}$ are provided in~\Cref{CIDS_Learning}, while the comprehensive expressions for both objective functions are available in~\Cref{app:obj}.

\subsection{Recommendation Policy Learning}
Upon establishing the DAIS and AIA established, we can construct the causal-indispensable state representation by combining these elements: \( \text{CIDS} = (\text{DAIS}, \text{AIA}) \). To operationalize this, we define a causal structure matrix for CIDS such that \( \matr{M}_{\text{CIDS}} = \{\matr{M}_{s \to s}\} \vee \{\matr{M}_{a \to s}\} \). The CIDS at time \( t \) can then be succinctly expressed as \( \text{CIDS}_t = \matr{M}_{\text{CIDS}} \odot s_t \), which serves as the input for learning the recommendation policy. In this way, the recommendation policy chooses the action only with the causal-indispensable state dimensions.
The detailed steps of this comprehensive learning approach are outlined in~\Cref{alg:al1}, encompassing three critical phases: (\RNum{1}) data collection using a sub-optimal policy, (\RNum{2}) learning of DAIS and AIA from the collected data, and (\RNum{3}) learning of recommendation policy with CIDS.

\begin{algorithm}
\caption{Training Procedure for Predictive Models and Policy Learning}
\begin{algorithmic}[1]
\State Input: Dataset $\mathcal{D}$, empty reply buffer $\mathcal{B}$, initial neural networks parameters $\theta_{D}$ and $\theta_{A}$, initial recommendation policy parameter $\theta_{\text{rec}}$, hyperparameters $\lambda_1, \lambda_2$

\For{episode = 1, ..., E}
    \For{t = 1, ..., T}
        \State Sample a random minibatch of $K$ trajectories from $D$
        \State // Optimize DAIS Predictive Model
        \State Update $\theta_{D}$ by minimizing $\mathcal{L}_{\text{DAIS-model}}$(~\cref{eq:l_dais}) with minibatch
        \State // Optimize AIA Predictive Model
        \State Update $\theta_{A}$ by minimizing $\mathcal{L}_{\text{AIA-model}}$(~\cref{eq:l_aia}) with minibatch
    \EndFor
\EndFor
\State // Training of the recommendation policy $\pi$
\For{episode = 1, ..., E}
    \State  Receive initial observation state $s_1$\;
    \For{t = 1, ..., T}
        \State Calculate $\matr{M}_{\text{CIDS}}$ based on $\matr{M}_{s \to s}$ and $\matr{M}_{a \to s}$
        \State Observe state $s_t$ and select action $a \sim \pi(\matr{M}_{\text{CIDS}} \odot s_t)$
        \State Execute $a_t$ in the environment
        \State Observe next state $s_{t+1}$ and receive reward $r_t$
        \State Store transition $(\matr{M}_{\text{CIDS}} \odot s_t, a_t, s_{t+1}, r_t)$ in $\mathcal{B}$
        \State Sample a random minibatch of $K$ transition from $D$
        \State Update recommendation policy parameter $\theta_{\text{rec}}$
    \EndFor
\EndFor
\label{alg:al1}
\end{algorithmic}
\end{algorithm}

\section{Experiments}
In this section, we begin by performing experiments on an online simulator and recommendation datasets to highlight the remarkable performance of our methods. We then conduct an ablation study to demonstrate the effectiveness of the causal-indispensable state representation. 

\subsection{Experimental Setup}
We introduce the experimental settings with regard to environments and state-of-the-art RL methods. The implementation details can be found in~\Cref{app:imp}.

\subsubsection{Recommendation Environments}
\paragraph{Online Evaluation}
For online evaluation, we employ VirtualTaobao~\cite{shi2019virtual}, a simulation platform that replicates an online retail environment. This platform leverages data from Taobao, one of China's largest online retail sites, using hundreds of millions of genuine data points. VirtualTaobao generates virtual customers and interactions, enabling our agent to be tested in a simulated "live" environment.

\paragraph{Offline Evaluation}
For offline evaluation, we use the following benchmark datasets:

\begin{itemize}
    \item \textbf{MovieLens (100k\footnote{https://grouplens.org/datasets/movielens/100k/} and 1M\footnote{https://grouplens.org/datasets/movielens/1m/})}: These datasets, derived from the MovieLens website, feature user ratings of movies. The ratings are on a 5-star scale, with each user providing at least 20 ratings. Movies and users are characterized by 23 and 5 features, respectively.
    \item \textbf{Douban-Book\footnote{https://huggingface.co/datasets/larrylawl/douban-dushu}}: 
    The Douban-Book dataset is a collection of user interactions and book information derived from the Douban website, a popular Chinese social networking service. This dataset primarily focuses on user ratings of books. 
    \item \textbf{Book-Crossing\footnote{https://grouplens.org/datasets/book-crossing/}}: The Book-Crossing dataset originates from an online book club known for its book exchange and tracking services. This dataset encompasses user ratings, book information, and user-book interactions. 
\end{itemize}

\subsubsection{Baseline}
In our experiments, we employ the following algorithms as the baseline:

\begin{itemize}
    \item \textbf{Deep Deterministic Policy Gradient (DDPG)~\cite{lillicrap2015continuous}}: An off-policy method suitable for environments with continuous action spaces, employing a target policy network for action computation.
    \item \textbf{Soft Actor-Critic (SAC)~\cite{haarnoja2018soft}
}: An off-policy maximum entropy Deep RL approach, optimizing a stochastic policy with clipped double-Q method and entropy regularization.
    \item \textbf{Twin Delayed DDPG (TD3)~\cite{fujimoto2018addressing}}: An enhancement over DDPG, incorporating dual Q-functions, less frequent policy updates, and noise addition to target actions.
    \item \textbf{MACS~\cite{wang2023plug}} A method that introduces a  counterfactual synthesis policy into the RL-based recommender systems, generating the counterfactual user interaction based on the casual view of MDP for data augmentation. 
    \item \textbf{TPGR~\cite{chen2019large}}: A model for large-scale interactive recommendations, combining RL and a binary tree structure.
    \item \textbf{PGPR~\cite{xian2019reinforcement}}: An explainable recommendation model that integrates knowledge awareness with RL techniques.
    \item \textbf{DRR-att~\cite{LIU2020106170}}: A model for interactive recommender systems that introduces an attention network into the state representation module of a deep reinforcement learning recommendation framework. The DRR-att is implemented on the DDPG.
\end{itemize}

Note that the methods TPGR and PGPR use the knowledge graphs which are not available in our experiment, hence we removed the related part to conduct the experiments.

\subsubsection{Evaluation Measures}
The effectiveness of our model is assessed using different measures in online and offline environments:

\begin{itemize}
    \item \textbf{Online Evaluation:} VirtualTaobao uses click-through rate (CTR) as the indicator of effectiveness. The formula for CTR is as follows:
    \begin{align}
    \notag
        \text{CTR} = \frac{\text{episode\_return}}{\text{episode\_length} \times \text{maximum\_reward}} ,
    \end{align}
    where $maximum\_reward$ represents the highest potential reward obtainable in a single step within an episode.
    \item \textbf{Offline Evaluation:} For dataset evaluation, we utilize three widely recognized numerical criteria: Precision, Recall, and Accuracy. 
\end{itemize}

\begin{table*}[!h]
\centering
\caption{Performance comparisons of our method with baselines on the MovieLens datasets, Douban-Book, and BookCrossing datasets. The best results are highlighted in bold and the second best use the symbol *.}
\label{Offline}
\begin{tabular}{c|ccc|ccc}
\hline
\multicolumn{1}{c|}{\multirow{2}{*}{}} & \multicolumn{3}{c|}{MovieLens-100k}                                        & \multicolumn{3}{c}{MovieLens-1M}                                          \\
\multicolumn{1}{c|}{}                  & Recall               & Precision                   & Accuracy              & Recall               & Precision                   & Accuracy             \\ \hline
\multicolumn{1}{c|}{DDPG}              & 0.4611$\pm$0.0091        & 0.4182$\pm$0.0053               & 0.4512$\pm$0.0312         &  0.7440$\pm$0.0045*        & 0.4310$\pm$0.0023               & 0.5820$\pm$0.0028        \\
\multicolumn{1}{c|}{SAC}               & 0.6899$\pm$0.0015*         & 0.4991$\pm$0.0079                & 0.6266$\pm$0.0102*         & 0.7149$\pm$0.0438        & 0.4103$\pm$0.0086               & 0.7016$\pm$0.0557        \\
\multicolumn{1}{c|}{TD3}               & 0.6822$\pm$0.1352        & 0.5001$\pm$0.0043*               & 0.6234$\pm$0.0921           & 0.7034$\pm$0.0546        & 0.4201$\pm$0.0055               & 0.7121$\pm$0.0585*        \\
\multicolumn{1}{c|}{TPGR}              & 0.3758$\pm$0.0026        & 0.3242$\pm$0.0077               & 0.3698$\pm$0.0026         & 0.6889$\pm$0.0088        & 0.3827$\pm$0.0108               & 0.5023$\pm$0.0067        \\
\multicolumn{1}{c|}{PGPR}              & 0.4252$\pm$0.0047        & 0.3988$\pm$0.0043               & 0.4128$\pm$0.0088         & 0.6927$\pm$0.0091        & 0.4023$\pm$0.0044               & 0.5122$\pm$0.0088        \\
\multicolumn{1}{c|}{DRR-att}            & \multicolumn{1}{l}{0.5213$\pm$0.0090} & \multicolumn{1}{l}{0.4784$\pm$0.0099} & \multicolumn{1}{l|}{0.4824$\pm$0.0070} & \multicolumn{1}{l}{0.6251$\pm$0.0081} & \multicolumn{1}{l}{0.4322$\pm$0.0067*} & \multicolumn{1}{l}{0.5278$\pm$0.0129} \\
\multicolumn{1}{c|}{DDPG-CIDS}              &  \textbf{0.7233$\pm$0.0024}                &     \textbf{0.5241$\pm$0.0044}                       &                \textbf{0.6721$\pm$0.0029}       &     \textbf{0.7442$\pm$0.0078}                 &     \textbf{0.4521$\pm$0.0055}                        &  \textbf{0.7124$\pm$0.0299}                    \\ \hline
\end{tabular}

\begin{tabular}{c|ccc|ccc}
\hline
\multirow{2}{*}{} & \multicolumn{3}{c|}{Douban-Book}                                    & \multicolumn{3}{c}{BookCrossing}                                   \\
                  & Recall               & Precision            & Accuracy              & Recall               & Precision            & Accuracy             \\ \hline
DDPG              & 0.4611$\pm$0.0091        & 0.4182$\pm$0.0053        & 0.4512$\pm$0.0312         & 0.0744$\pm$0.0045*        & 0.0531$\pm$0.0023*        & 0.0582$\pm$0.0028        \\
SAC               & 0.4700$\pm$0.0023        & 0.4203$\pm$0.0039        & 0.4400$\pm$0.0029         & 0.0721$\pm$0.0029        & 0.0511$\pm$0.0039        & 0.0572$\pm$0.0022        \\
TD3               & 0.4711$\pm$0.0051        & 0.4199$\pm$0.0019        & 0.4488$\pm$0.0022         & 0.0702$\pm$0.0019        & 0.0510$\pm$0.0019        & 0.0566$\pm$0.0033        \\
TPGR              & 0.4555$\pm$0.0192        & 0.4112$\pm$0.0048        & 0.4488$\pm$0.0027         & 0.0725$\pm$0.0032        & 0.0452$\pm$0.0044        & 0.0640$\pm$0.0041        \\
PGPR              & 0.4602$\pm$0.0099        & 0.4182$\pm$0.0077        & 0.4460$\pm$0.0053         & 0.0699$\pm$0.0011        & 0.0393$\pm$0.0012        & 0.0733$\pm$0.0013*        \\
DRR-att           & 0.4827$\pm$0.0088* & 0.4213$\pm$0.0040* & 0.4521$\pm$0.0040* & 0.0702$\pm$0.0050 & 0.0423$\pm$0.0030 & 0.0587$\pm$0.0044 \\
DDPG-CIDS              &   \textbf{0.5422$\pm$0.0044}                  &     \textbf{0.4721$\pm$0.0041}                 &   \textbf{0.4817$\pm$0.0098}                    &    \textbf{0.0892$\pm$0.0056}                  &   \textbf{0.0723$\pm$0.0033}                   &   \textbf{0.0823$\pm$0.0044}          \\ \hline
\end{tabular}
\end{table*}

\subsection{Overall Results}
\paragraph{Online Simulator.}
The performance comparison, as seen in~\Cref{fig:CIDS_OVERALL} exhibits that algorithms enhanced with CIDS, namely DDPG-CIDS, SAC-CIDS, and TD3-CIDS, surpass the baseline methods significantly across the learning episodes.
DDPG-CIDS, for instance, indicates a considerable uplift in CTR over the standard DDPG algorithm. This improvement highlights the impact of integrating a causal understanding into the policy learning mechanism. SAC-CIDS similarly outperforms the conventional SAC model, which underscores the robustness of CIDS in environments with high-dimensional action spaces. The TD3-CIDS also shows substantial gains, especially in the later stages of the learning curve, suggesting the effectiveness of CIDS in enhancing temporal-difference learning algorithms. 
\Cref{fig:bar_overall} details the 1-step CTR performance of all baselines and our DDPG-CIDS(since the method DRR-att is also implemented on
the DDPG framework) in the VirtualTaobao simulation. The DDPG-CIDS again stands out, affirming its superior policy learning capacity by consistently achieving higher average CTRs compared to the other methods.

\begin{figure}[h]
    \centering
    \includegraphics[width=0.8\linewidth]{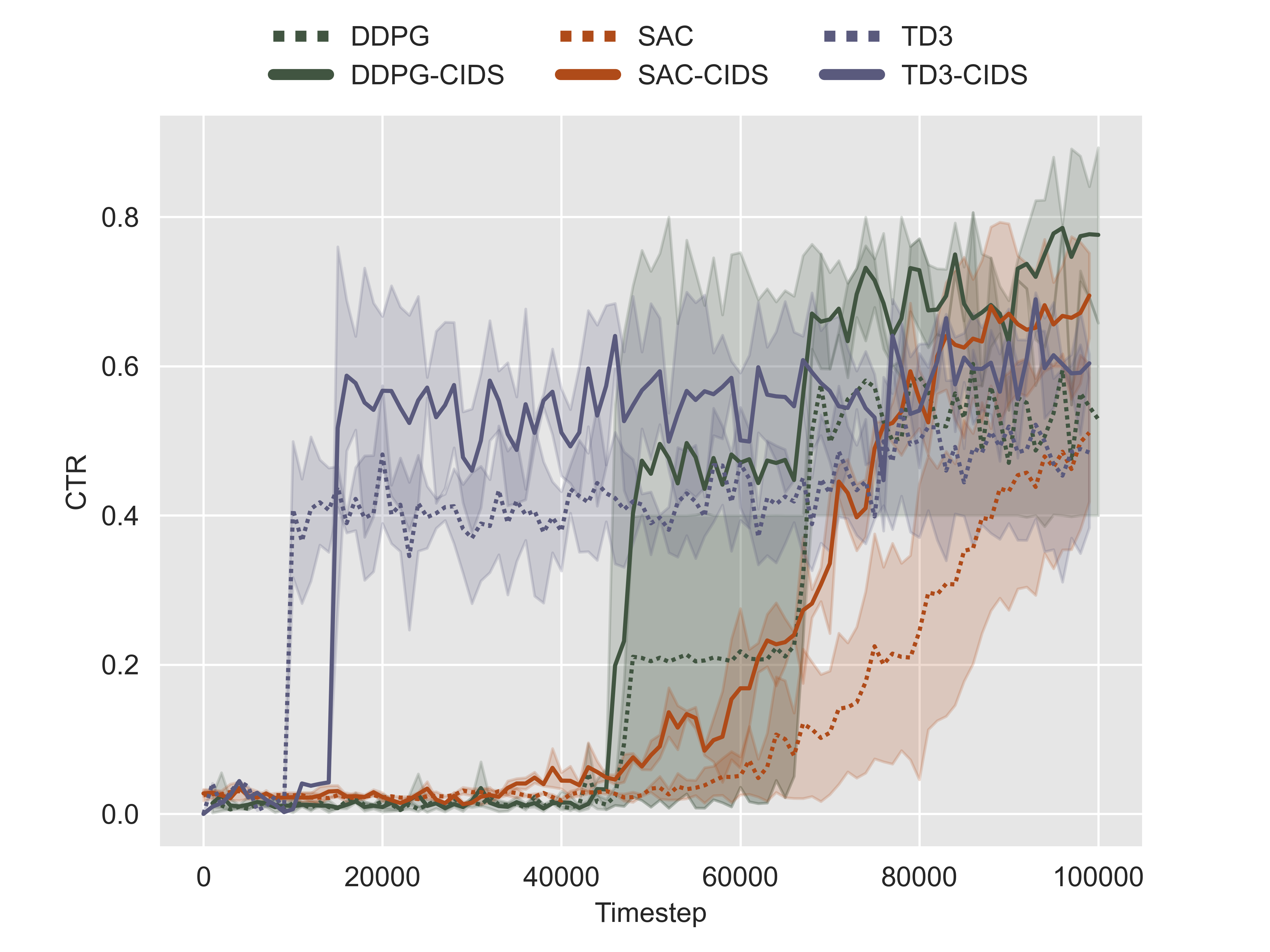}
\caption{Performance comparison of baseline algorithms and corresponding CIDS-enhanced methods in the VirtualTaobao simulation.}
\label{fig:CIDS_OVERALL}
\end{figure}

\paragraph{Offline Dataset.}
\label{subsec:offline_performance}
Our experimental evaluation showcases the efficacy of the DDPG-CIDS framework across several datasets, including MovieLens-100k, MovieLens-1M, Douban-Book, and BookCrossing. We benchmarked against a range of state-of-the-art algorithms and highlighted the best and second-best results with bold and asterisks, respectively, as detailed in Table~\ref{Offline}.
Particularly on MovieLens-100k, DDPG-CIDS surpassed all baselines, reinforcing the benefits of integrating CIDS into the recommendation mechanism. Its top scores in Recall, Precision, and Accuracy highlight the method's effectiveness. On the larger MovieLens-1M, DDPG-CIDS maintained top-tier results, suggesting scalability.
DDPG-CIDS's superior performance on the Douban-Book and BookCrossing datasets confirmed its adaptability to various content types and user interactions.

Overall, these results clearly manifest the advantages of incorporating CIDS into the recommendation policy learning framework. The boosted performance in Precision and Recall signifies that CIDS facilitates the identification of more relevant information, while the improved Accuracy reflects the overall reliability of the policy. 

\begin{figure}[h]
    \centering
    \includegraphics[width=0.7\linewidth]{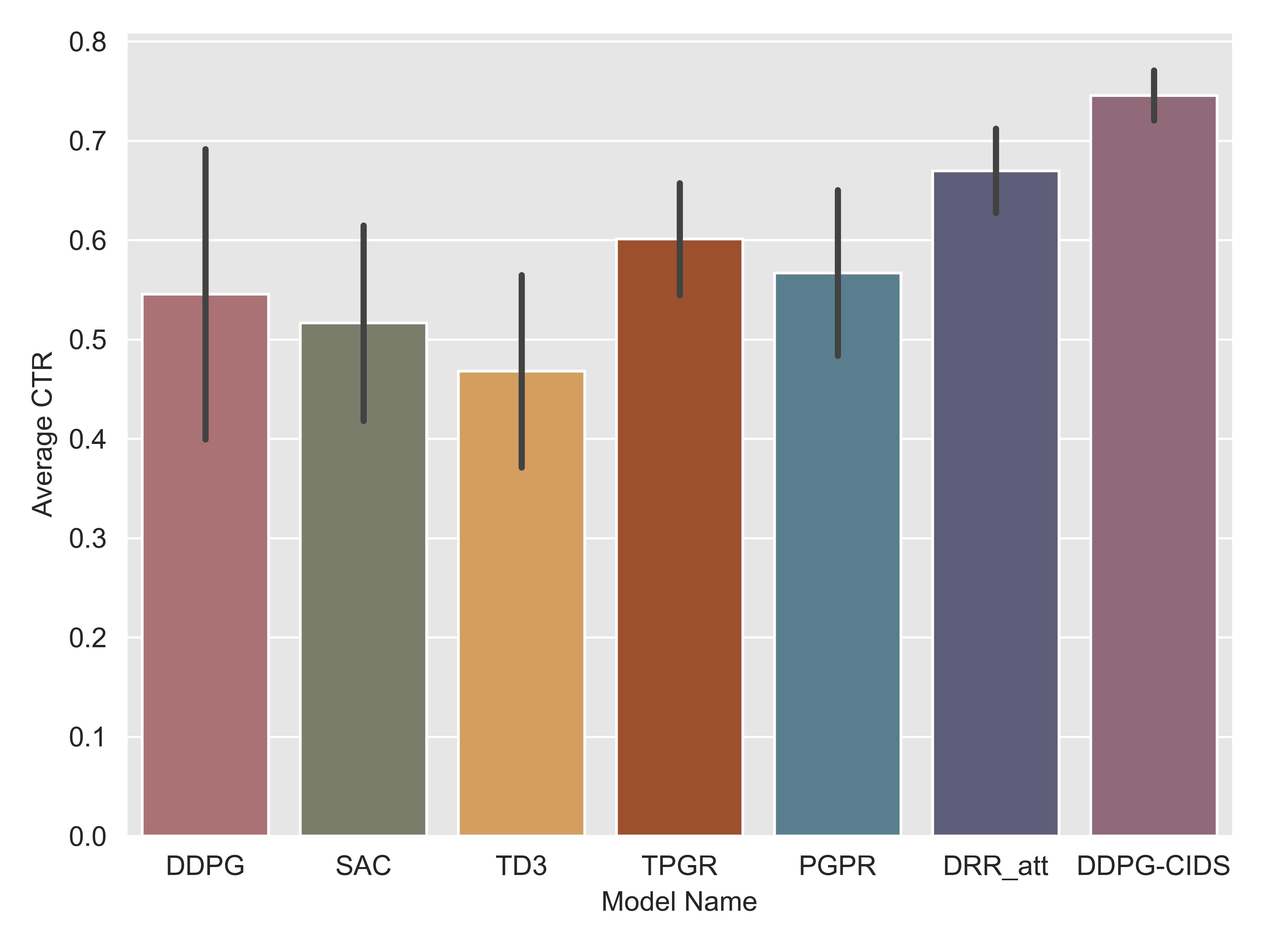}
\caption{The 1-step CTR performance in the VirtualTaobao simulation is presented as the mean with error bars.}
\label{fig:bar_overall}
\end{figure}

\begin{figure*}[h]
     \centering
     \begin{subfigure}[b]{0.3\linewidth}
         \centering
         \includegraphics[width=\linewidth]{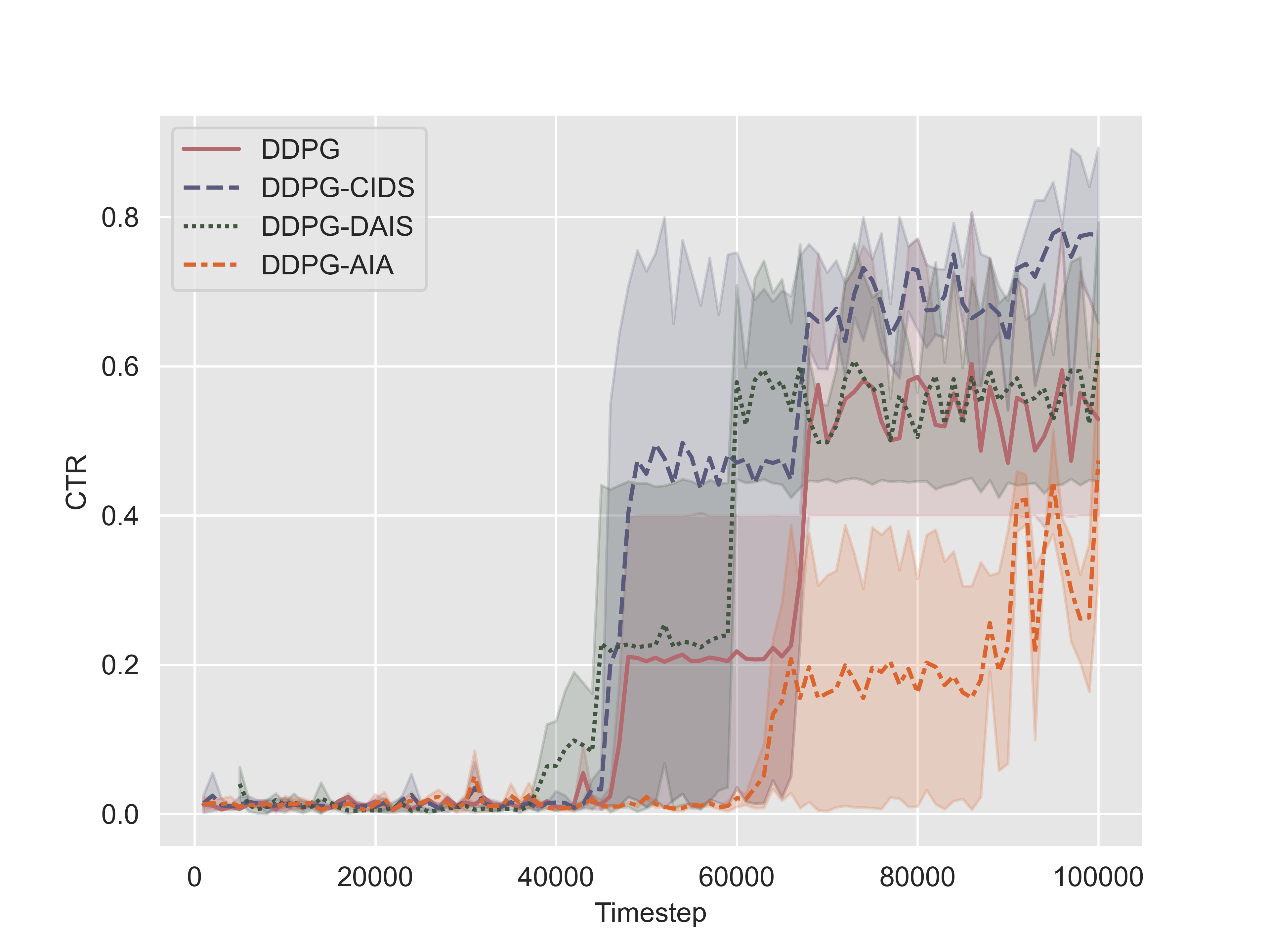}
         \caption{}
         \label{a}
     \end{subfigure}
     \begin{subfigure}[b]{0.3\linewidth}
         \centering
         \includegraphics[width=\linewidth]{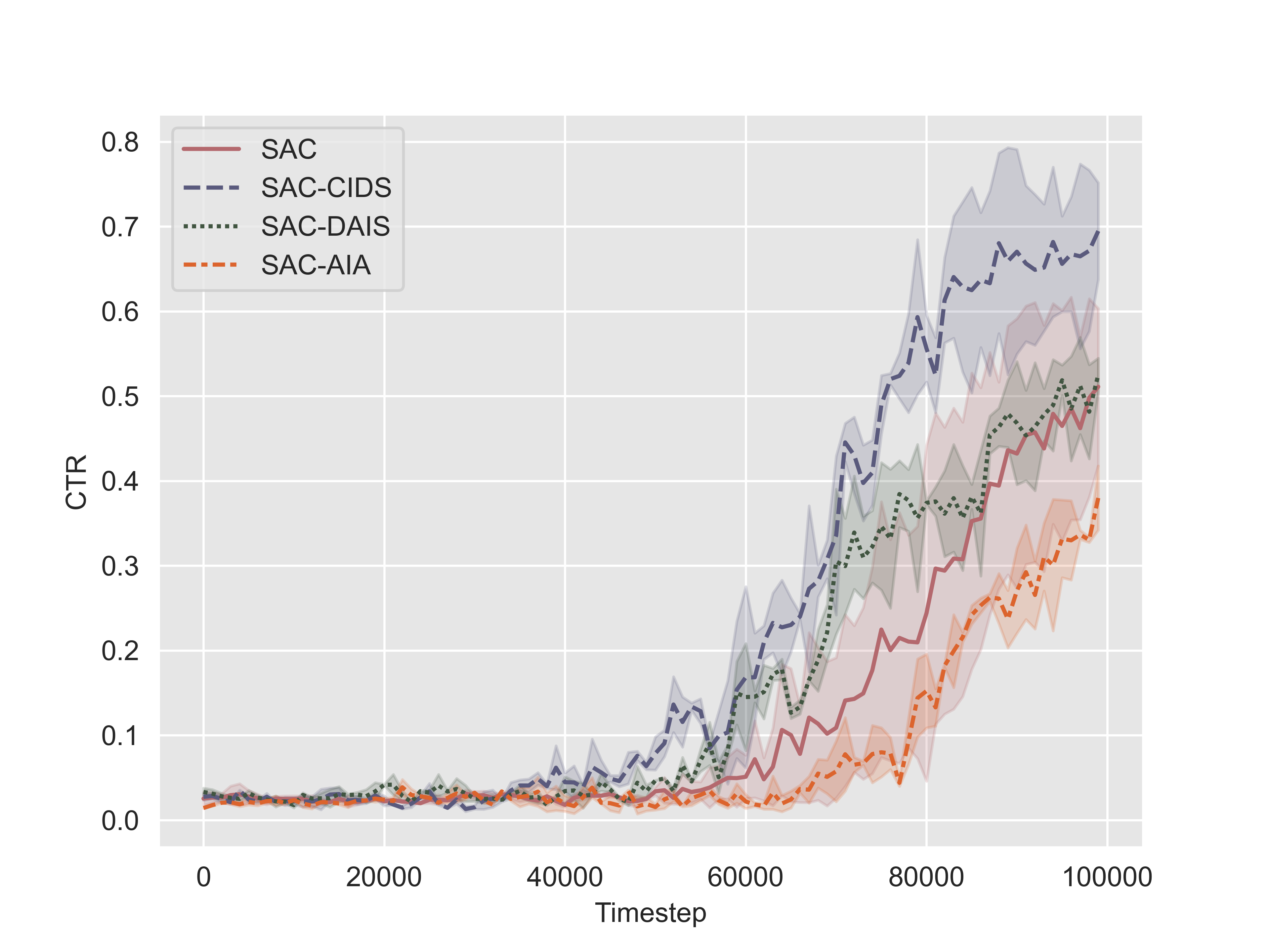}
         \caption{}
         \label{b}
     \end{subfigure}
     \begin{subfigure}[b]{0.3\linewidth}
         \centering
         \includegraphics[width=\linewidth]{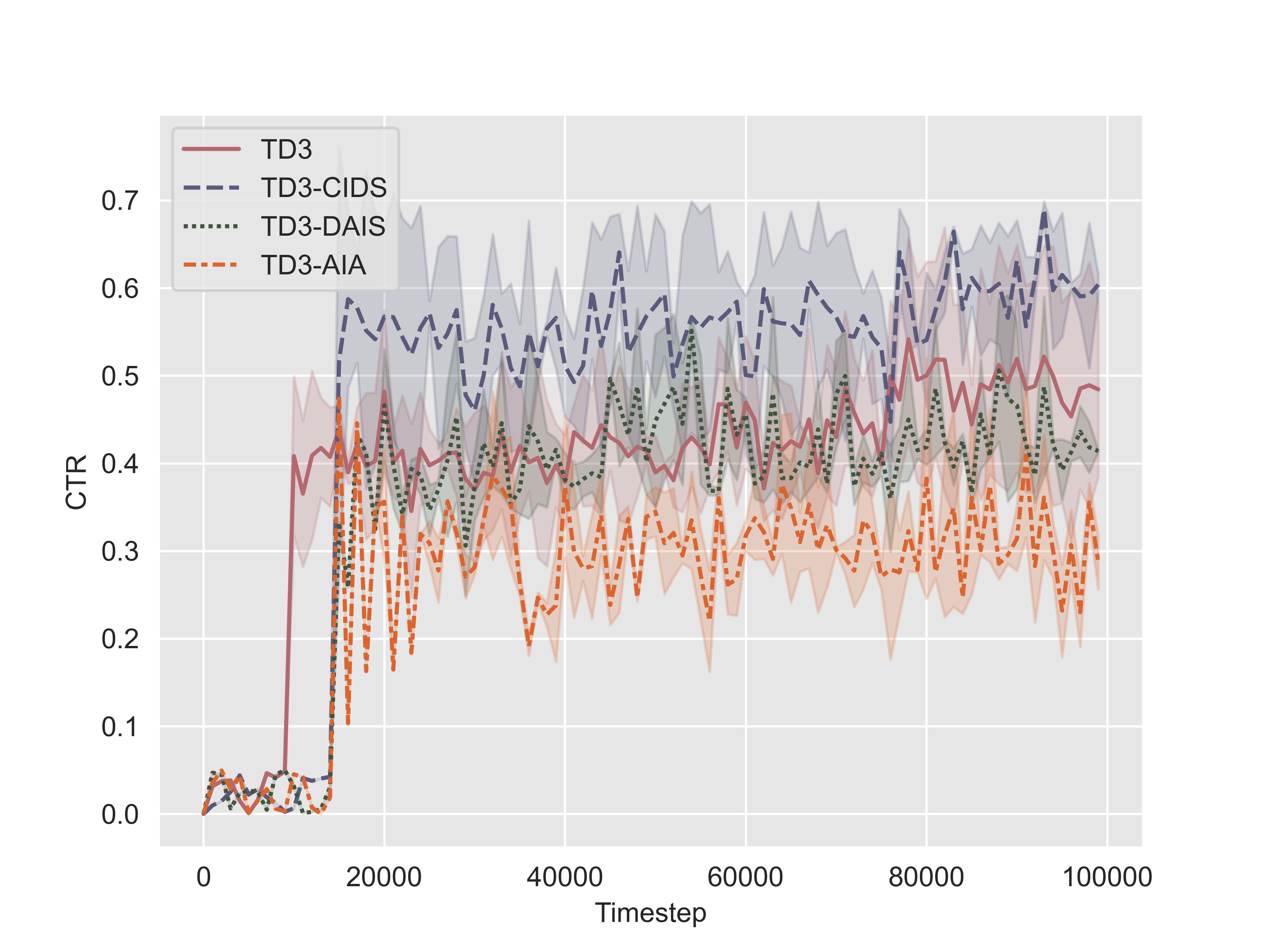}
         \caption{}
         \label{c}
     \end{subfigure}
        \caption{Evaluation with different RL frameworks: (a) DDPG as the backbone, (b) SAC as the backbone, and (c) TD3 as the backbone. Ablation versions with only DAIS representation and AIA representation are also included in each backbone.}
\label{fig:ablation_study}
\end{figure*}

\subsection{Ablation Study}
\label{subsec:ablation_study}
In this section, we first delve into the ablation study designed to dissect the contributions of the different components of our framework, specifically focusing on CIDS and its constituents: DAIS and AIA. Then we investigate the impact of the sparsity of the learned causal structure on recommendation policy learning.

\subsubsection{Impact of CIDS, DAIS, and AIA on Recommendation Policy Learning}
\label{subsubsec:impact_cids_dais_aia}

Our analysis is aimed at understanding the individual and combined effects of DAIS and AIA components on the policy learning process. We considered three variants: DDPG enhanced with only DAIS (DDPG-DAIS), only AIA (DDPG-AIA), and both DAIS and AIA (DDPG-CIDS). As depicted in ~\Cref{fig:ablation_study}, DDPG-CIDS consistently outperforms the other variants, suggesting that the combination of DAIS and AIA is critical for capturing the complete causal structure necessary for robust policy learning. 
While the DDPG-DAIS variant demonstrates some improvements over the baseline DDPG, the DDPG-AIA variant does not perform as well, even falling behind the baseline performance. This observation can be attributed to the inherent characteristics of the state representations. DAIS focuses on the aspects of the state that are directly affected by the action, often containing the most important information needed for making decisions. In contrast, AIA represents state dimensions that do not directly interact with action variables but influence DAIS indirectly. Training with only the AIA representation reveals that a state representation lacking direct action-relevant information can detrimentally affect the learning of the recommendation policy. Hence, it is the combined effect of DAIS and AIA within the DDPG-CIDS framework that leads to superior recommendation performance, underscoring the importance of integrating comprehensive causal knowledge into the recommendation process.

\subsubsection{Results with Different RL Frameworks}
\label{subsubsec:rl_frameworks_results}
Further, we extended our ablation study to evaluate the impact of integrating CIDS into different RL frameworks: DDPG, SAC, and TD3. As illustrated in~\Cref{fig:CIDS_OVERALL} and~\Cref{fig:ablation_study}, the inclusion of CIDS enhances the performance of all backbone architectures. Notably, SAC-CIDS and TD3-CIDS demonstrate a marked improvement in CTR, highlighting our method's flexibility with different reinforcement learning algorithms. The consistent improvement boost across diverse frameworks validates the adaptability and effectiveness of the CIDS framework for RL-based recommender systems.

\begin{table}[h]
\centering
\caption{Mean and standard deviation of CTR across various $\lambda_1$ values over different timestep $t$ in VirtualTaobao, presented as percentages.}
\label{tab:lamda_results}
\resizebox{0.98\linewidth}{!}{%
\begin{tabular}{c|cccc}
\hline
$t$   & $\lambda_1=0$     & $\lambda_1=1\mathrm{e}{-4}$         & $\lambda_1=5\mathrm{e}{-4}$           & $\lambda_1=9\mathrm{e}{-4}$  \\ \hline
$2\mathrm{e}4$ & 1.21 $\pm$ 0.17   & \textbf{2.46 $\pm$ 0.64}  & 1.53 $\pm$ 0.68            & 1.24 $\pm$ 0.17   \\
$4\mathrm{e}4$ & 1.32 $\pm$ 0.07   & \textbf{2.91 $\pm$ 0.54}  & 2.52 $\pm$ 0.18            & 1.44 $\pm$ 0.24   \\
$6\mathrm{e}4$ & 24.66 $\pm$ 18.37 & 42.81 $\pm$ 27.55        & \textbf{58.38 $\pm$ 8.01}  & 23.50 $\pm$ 17.76 \\
$8\mathrm{e}4$ & 43.38 $\pm$ 4.77  & 58.75$\pm$ 15.39         & \textbf{71.21 $\pm$ 12.73} & 37.05 $\pm$10.72  \\
$1\mathrm{e}5$ & 53.55 $\pm$ 19.29 & 67.99 $\pm$ 12.45       & \textbf{77.87 $\pm$ 4.18}  & 51.48 $\pm$ 16.84 \\ \hline
\end{tabular}
}
\end{table}
\subsubsection{Impact of the Sparsity of Learned Causal Structure on Recommendation Policy Learning}
\label{subsec:impact_sparsity}

The sparsity of the structural matrices in our model is regulated by the hyperparameters $\lambda_1$ and $\lambda_2$, as described in Section~\ref{sec:obj}. A greater value for these parameters induces a sparser causal structure. Specifically, we examine the role of $\lambda_1$, which influences the causal structure between actions and state dimensions, to assess its impact on the performance of the recommendation policy. This focus stems from observations in Section~\ref{subsubsec:impact_cids_dais_aia}, which highlighted that the action-state causal structure significantly affects the recommendation policy, more so than the inter-state causal relationships.

Table~\ref{tab:lamda_results} summarizes the mean and standard deviation of the CTR across a spectrum of $\lambda_1$ values, tracked over successive timesteps within the VirtualTaobao simulation. It becomes apparent from the data that the sparsity level, governed by $\lambda_1$, is a critical determinant in the policy's learning performance. Optimal sparsity can enhance policy learning, but an overly sparse structure (such as when $\lambda_1$ is set to 9e-4) can be detrimental, possibly due to insufficient information for the policy to leverage. Conversely, a lower $\lambda_1$ value may facilitate a more robust early learning phase by providing a richer informational context. However, too low a $\lambda_1$ value may introduce noise through non-essential dimensions, thus impeding the policy's optimization process.

\section{Related Work}
\vspace{1mm}\noindent\textbf{DRL-based recommender system.}

DRL-based recommender systems model the interaction recommendation process as Markov Decision Processes (MDPs), utilizing deep learning to estimate the value function and tackle high-dimensional MDPs~\cite{mahmood2007learning,CHEN2023110335}. Recognizing the significance of negative feedback in understanding user preferences, \citet{zhao2018recommendations} introduced DEERS, which processes positive and negative signals separately at the input layer to avoid negative feedback overwhelming positive signals due to their sparsity. \citet{chen2020knowledge} incorporated knowledge graphs into DRL for interactive recommendation, employing a local knowledge network to enhance efficiency. \citet{hong2020nonintrusive} proposed NRRS, a model-based approach integrating nonintrusive sensing and reinforcement learning for personalized dynamic music recommendation. NRRS trains a user reward model that derives rewards from three user feedback sources: scores, opinions, and wireless signals. Moving beyond predefining a reward function, \citet{chen2021generative} introduced InvRec, utilizing inverse reinforcement learning to infer a reward function from user behaviors and directly learning the recommendation policy from these behaviors. InvRec employs inverse DRL as a generator to augment state-action pairs, offering a novel approach to the task. Addressing offline RL methods, \citet{wang2023causal} proposed CDT4Rec, which designed a new causal mechanism to estimate the reward function. Finally, \citet{chen2023intrinsically} proposed a causal augmentation method for RLRS, providing a new perspective to address the exploration problem in RLRS.

\vspace{1mm}\noindent\textbf{Causal Recommendation.}
In recent years, the recommendation domain has witnessed significant advancements through the integration of causal inference techniques. These techniques, especially in de-biasing training data, have been transformative for the field.
~\citet{bonner2018causal} developed a domain adaptation algorithm, capitalizing on biased logged feedback to predict randomized treatment effects and address the challenge of random exposure.
Further expanding this field,~\citet{liu2020general} presented KDCRec, a knowledge distillation framework aimed at addressing bias in recommender systems by extracting insights from uniformly distributed data.~\citet{zhang2021causal} tackled the pervasive issue of popularity bias, devising a novel causal inference paradigm to adjust recommendation scores through targeted causal interventions.
Additionally,~\citet{9737000} proposed CI-LightGCN, a Causal Incremental Graph Convolution method for updating graph convolutional networks in recommender systems, efficiently handling model updates with new data while maintaining recommendation accuracy.
The application of counterfactual inference in recommender systems has also gained traction, being utilized for path-specific effects removal~\cite{wang2021clicks} and out-of-distribution (OOD) generalization~\cite{wang2022causal}. Moreover, an increasing number of studies are adopting counterfactual reasoning for objectives such as providing explanations, enhancing model interpretability, and learning robust representations~\cite{tan2021counterfactual, madumal2020explainable, zhang2021causerec}.

\section{Conclusion}
In this study, we address the challenge of high-dimensional and noisy state spaces in RLRS by introducing Causal-Indispensable State Representations (CIDS). Focusing on Directly Action-Influenced State Variables (DAIS) and Action-Influence Ancestors (AIA), our approach identifies key state components essential for effective recommendation policy learning. Utilizing conditional mutual information, CIDS effectively discerns causal relationships within the generative process, isolating crucial state variables. Theoretical evidence supports the identifiability of these variables, allowing for the construction of optimized state representations. This novel framework enables training on a refined subset of the state space, significantly enhancing recommendation accuracy and efficiency, as demonstrated in our extensive experimental evaluations.
Future work could focus on incorporating potential confounders to further refine and strengthen the causal relationships within RLRS. Exploring the simultaneous training of both the representation and policy also presents a promising direction.

\appendix
\section{Definitions in Causality}
Here we briefly mention some fundamental definitions~\cite{10.5555/3202377, pearl2009causality, spirtes2000causation}, which we use in our paper to present and prove our methodology.

\begin{definition}[d-Separation~\cite{10.5555/3202377}]
\label{def-d}
In  Directed Acyclic Graph (DAG) $\mathcal{G}$, a path between two nodes, denoted as $i_n$ and $i_m$, is considered blocked by a set $\mathbf{S}$. This occurs when neither $i_n$ nor $i_m$ are included in $\mathbf{S}$, and there exists a node $i_k$ that satisfies one of the following two possibilities:
\begin{enumerate}[(i)]
\item $i_k \in S$ and $i_{k-1} \rightarrow i_k \rightarrow i_{k+1}$, $i_{k-1} \leftarrow i_k \leftarrow i_{k+1}$, or $i_{k-1} \leftarrow i_k \rightarrow i_{k+1}$; 
\item $i_k$ and its descendants are not part of the blocking set $\mathbf{S}$, and $i_{k-1} \rightarrow i_k \leftarrow i_{k+1}$.
\end{enumerate}
\end{definition}

\begin{definition}[Structural Causal Models~\cite{pearl2009causality}]
A Structural Causal Model (SCM) $\mathcal{M} := (S, P_U)$ is associated with a DAG \( \mathcal{G} \), which consists of a collection $S$ of $n$ structural assignments :

\begin{equation}
\begin{aligned}
X_i := f_i(\boldsymbol{PA}_i, U_i), \:\: i = 1,...,n, 
\end{aligned}
\end{equation}
where $V = \{X_1,...,X_n\}$ is a set of endogenous variables, and $\boldsymbol{PA}_i \subseteq \{X_1,...,X_n\} \setminus \{X_i\}$ represent parents of $X_i$, which are also called direct causes of $X_i$. And $U = \{U_1,...,U_n\}$ are noise variables, determined by unobserved factors. We assume that noise variables are jointly independent. Correspondingly, $P_U$ is the joint distribution over the noise variables. Each structural equation $f_i$ is a causal mechanism that determines the value of $X_i$ based on the values of $\boldsymbol{PA}_i$ and the noise term $U_i$.
\end{definition}

\begin{definition}[Markov property~\cite{10.5555/3202377}] Given a DAG $\mathcal{G}$, and a joint distribution $P_X$, the distribution is said to satisfy
\begin{enumerate}[(i)]
    \item global Markov property in relation to the DAG $\mathcal{G}$ if the condition $\mathbf{A} \independent_G \mathbf{B} | \mathbf{S}$ implies that $\mathbf{A} \independent \mathbf{B} | \mathbf{S}$ holds true for all distinct sets of vertices $\mathbf{A}, \mathbf{B}, \mathbf{S}$.\ \label{global_M}
    \item Markov factorization property in relation to the DAG $\mathcal{G}$ if the expression $p(\mathbf{x}) = p(x_1, x_2, ..., x_d) = \prod_{i=1}^d p(x_j|\textbf{pa}^\mathcal{G}_j)$ holds true.
\end{enumerate}\label{Markov}
\end{definition}

\begin{definition}[Causal Faithfulness~\cite{10.5555/3202377, pearl2009causality, spirtes2000causation}]
    A distribution $P$ is faithful to a DAG  $\mathcal{G}$ if no conditional independence relations other than the ones entailed by the Markov property are present.
\end{definition}

\section{Proof of Theorem 3.3}
\label{app:th3.3}
\begin{proof}

Firstly, the Markov condition ensures that for any pair of non-adjacent variables in the causal graph \( \mathcal{G} = (\mathbf{V}, \mathcal{E}) \), denoted \( V_i \) and \( V_j \), there is conditional independence between them given a set of other variables. This condition allows us to infer the absence of direct causal links between certain variables in the MDP.

Secondly, the faithfulness assumption posits that all observed conditional independencies are indicative of true separations in the causal graph. This implies that if two variables are found to be conditionally independent given a set of other variables, there is indeed no direct causal path between them in the graph.

Together, these conditions permit the identification of the binary structural masks \( \matr{M}_{s \to s} \) and \( \matr{M}_{a \to s} \)
defined over the set \( \mathbf{V} \), which are identifiable through the examination of conditional independence relationships.
\end{proof}

\section{Objective Function}
\label{app:obj}
The objective function for the DAIS predictive model is given by:
\begin{equation}
\begin{aligned}
    & \mathcal{L}_{\text{DAIS-model}} = -\mathcal{L}_{\text{DAIS}} + \lambda_1 \| \matr{M}_{a \to s} \|_1 \\
    & = \Bigg(\mathbb{E}_{s_t, a_t, s_{t+1}\sim
    \mathcal{D}} \left[ \sum_{j=1}^d \log P(s^j_{t+1} \in \text{DAIS}_{t+1} | s_t, \matr{M}^{(,j)}_{a \to s} ; \theta_{D_1}) \right] \\
    & -\mathbb{E}_{s_t, a_t, s_{t+1}\sim
    \mathcal{D}} \left[ \sum_{j=1}^d \log P(s^j_{t+1} \in \text{DAIS}_{t+1} | a_{t}, s_t, \matr{M}^{(,j)}_{a \to s} ; \theta_{D_2}) \right]
    \Bigg)\\ 
    &+ \lambda_1 \| \matr{M}_{a \to s} \|_1,
    \end{aligned}
\end{equation}
where $\mathcal{L}_{\text{DAIS}}$ is the loss related to the learning of DAIS and $\lambda_1$ is a hyperparameter that controls the sparsity of the action-to-state causal structure matrix.

For the AIA predictive model, the objective function is:
\begin{equation}
\begin{aligned}
    & \mathcal{L}_{\text{AIA-model}} = -\mathcal{L}_{\text{AIA}} + \lambda_2 \| \matr{M}_{s \to s} \|_1\\
     & + \Bigg( \mathbb{E}_{s_{t-1}, a_{t-1}, s_{t} \sim \mathcal{D}} \left[ \sum_{j=1}^d \log P(s^j_{t-1} \in \text{AIA}_{t-1} | s_t, \matr{M}^{(,j)}_{a \to s}; \theta_{A_1}) \right] \\
& - \mathbb{E}_{s_{t-1}, a_{t-1}, s_{t} \sim \mathcal{D}} \left[ \sum_{j=1}^d \log P(s^j_{t-1} \in \text{AIA}_{t-1}  | a_{t-1}, s_t, \matr{M}^{(,j)}_{a \to s}; \theta_{A_2}) \right] \Bigg)\\
& \lambda_2 \| \matr{M}_{s \to s} \|_1,\\
\end{aligned}
\end{equation}
where $\mathcal{L}_{\text{AIA}}$ is the loss associated with the learning of AIA and $\lambda_2$ is a hyperparameter for the sparsity of the state-to-state causal structure matrix.

\section{Implementation Details}
\label{app:imp}
Initially, we train a sub-optimal recommendation policy utilizing the DDPG algorithm with default parameters as delineated in \cite{lillicrap2015continuous}. This policy undergoes training across 1,000,000 episodes, with the best-performing iteration saved as our expert policy. Utilizing this expert policy, we generate a dataset of expert trajectories within the environment. It should be noted that the expert's exposure to the environment is limited, with only a finite number of trajectories being sampled. These expert trajectories comprise the observed dataset employed for training the DAIS and AIA components.

For the DAIS and AIA predictive models, we opt for Multilayer Perceptron (MLP) networks. Both models feature a tri-layered fully connected architecture with each layer outputting 128 units. Following each hidden layer is a ReLU activation function. The hyperparameters $\lambda_i$ are set as follows: $\lambda_1 = 5 \times 10^{-4}$ and $\lambda_2 = 10^{-4}$.
In training the recommendation policy, we fix the actor network's learning rate at $10^{-4}$ and the critic network's at $10^{-3}$. The discount factor, denoted by $\gamma$, is established at 0.95, paired with a soft target update rate, $\tau$, of 0.001. The network's hidden layer size is determined to be 128, and the replay buffer capacity is $10^6$ entries.

We adhere to parameter configurations as specified in stable baselines3\footnote{https://stable-baselines3.readthedocs.io/en/master/} or as originally reported in the respective baseline papers for all baseline methods.

\bibliographystyle{ACM-Reference-Format}
\balance
\bibliography{sample-base}

\end{document}